\definecolor{background_gray}{gray}{0.9}
\newtheorem{theorem}{Theorem}   
\newtheorem{definition}{Definition} 
\newcommand{\equalcontrib}{\textsuperscript{*}}
\title{A Game-Theoretic Spatio-Temporal Reinforcement Learning Framework for Collaborative Public Resource Allocation}
\author {
    Songxin Lei\textsuperscript{\rm 1}\equalcontrib,
    Qiongyan Wang\textsuperscript{\rm 1}\equalcontrib,
    Yanchen Zhu\textsuperscript{\rm 1},
    Hanyu Yao\textsuperscript{\rm 2},
    Sijie Ruan\textsuperscript{\rm 3},\\
    Weilin Ruan\textsuperscript{\rm 1},
    Yuyu Luo\textsuperscript{\rm 1},
    Huaming Wu\textsuperscript{\rm 4},
    Yuxuan Liang\textsuperscript{\rm 1}\thanks{Corresponding author. E-mail: yuxliang@outlook.com}
}
\begin{document}

\maketitle

\begin{abstract}
Public resource allocation involves the efficient distribution of resources, including urban infrastructure, energy, and transportation, to effectively meet societal demands. However, existing methods focus on optimizing the movement of individual resources independently, without considering their capacity constraints. To address this limitation, we propose a novel and more practical problem: \underline{C}ollaborative \underline{P}ublic \underline{R}esource \underline{A}llocation (CPRA), which explicitly incorporates capacity constraints and spatio-temporal dynamics in real-world scenarios. We propose a new framework called \underline{G}ame-Theoretic \underline{S}patio- \underline{T}emporal \underline{R}einforcement \underline{L}earning (GSTRL) for solving CPRA. Our contributions are twofold: 1) We formulate the CPRA problem as a potential game and demonstrate that there is no gap between the potential function and the optimal target, laying a solid theoretical foundation for approximating the Nash equilibrium of this NP-hard problem; and 2) Our designed GSTRL framework effectively captures the spatio-temporal dynamics of the overall system. We evaluate GSTRL on two real-world datasets, where experiments show its superior performance. 
\end{abstract}


\vspace{-1.5em}
\section{Introduction}
Effective public resource allocation is a cornerstone of equitable and resilient urban development. From emergency response~\cite{Feng2024mov} and disaster relief~\cite{Wang2021intro} to mobile healthcare~\cite{Liu2024mov} and traffic control~\cite{Ji2016mov}, the ability to dynamically deploy limited resources directly impacts the safety, well-being, and quality of life of millions of residents. In particular, underserved communities, high-density zones, and vulnerable populations are disproportionately affected by inefficiencies in service delivery~\cite{Yang2024intro}. As cities grow increasingly complex and demand fluctuates across space and time, traditional static or heuristic-based systems fall short in meeting these critical needs.

To ensure responsive and inclusive service delivery, \textit{dynamic public resource allocation} is emerging as a vital AI-enabled mechanism for optimizing operational efficiency under uncertainty~\cite{Liu2022intro}. Yet, as shown in \textbf{Fig.~\ref{fig:example}}, real-world deployment remains highly challenging: resources are 
limited in capacity, demands are volatile and spatially clustered, so that coordination among resources to serve a single region in special cases is necessary to prevent overload or neglect~\cite{Lv2024intro}. These factors underscore the urgency of developing principled and scalable approaches that can reason about cooperation, adapt to changing conditions, and ultimately improve social outcomes.
\begin{figure}[!t] 
    \centering
    \includegraphics[width=0.46\textwidth]{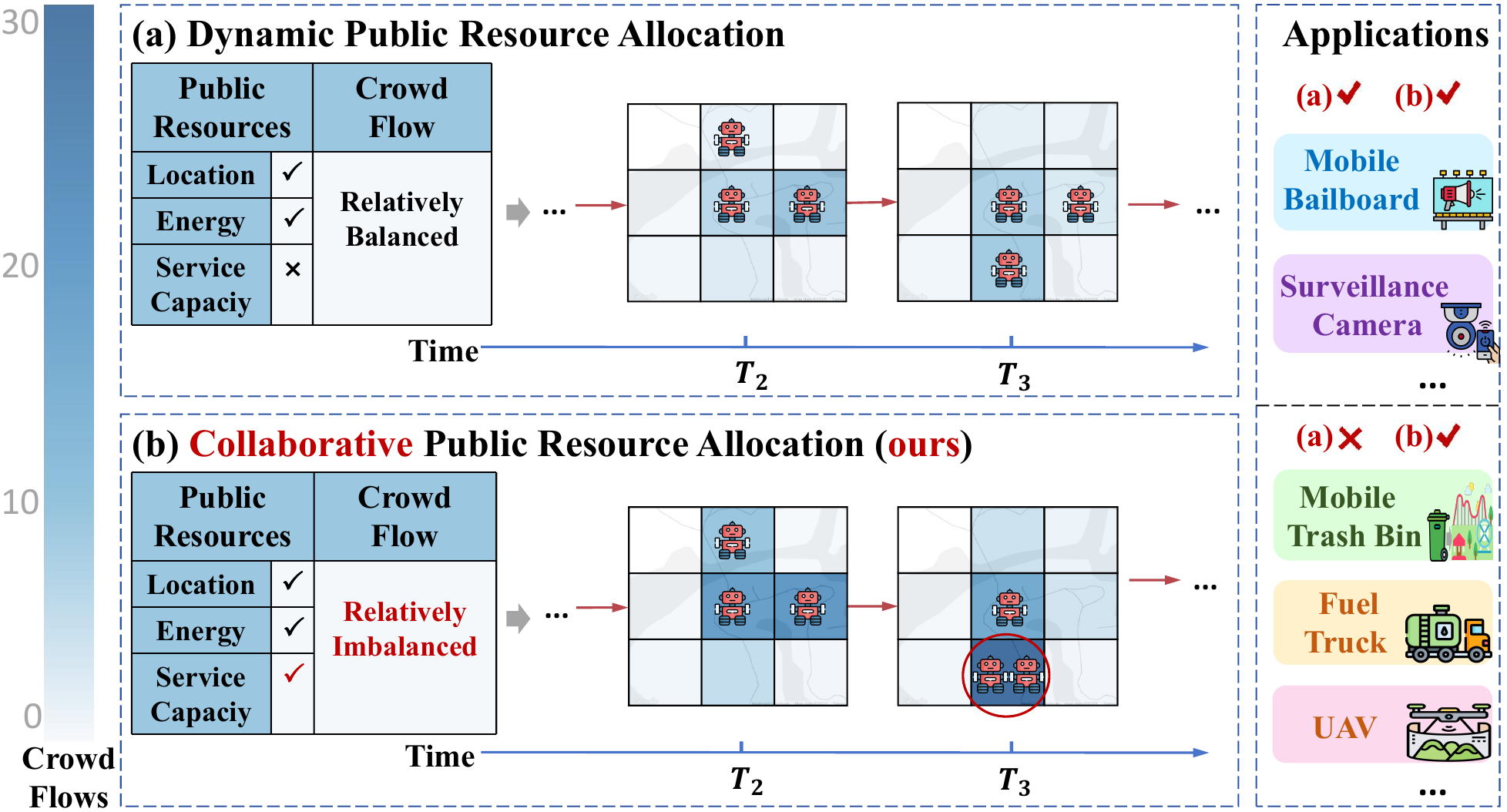}
    \vspace{-0.3em}
    \caption{Dynamic public resource allocation vs. Collaborative public resource allocation (CPRA).}
    \label{fig:example}
    \vspace{-1em}
\end{figure}

Unfortunately, most existing AI-based approaches to dynamic allocation treat each resource independently, assuming either unlimited capacities or neglecting the alignment between the individual and the system~\cite{Ruan2020pra}. Such simplifications fail to capture the collaborative nature of real-world systems, where multiple limited-capacity resources must jointly serve emerging high-demand areas~\cite{Sadeghi2022intro}. For example, during peak hours or emergencies, a single ambulance or mobile testing unit is rarely sufficient~\cite{Ji2019pra}. 
This reveals two key requirements for effective dynamic allocation: the need for \textit{capacity-aware optimization} and the need for \textit{cooperative decision-making}.

To bridge this gap, we introduce the \underline{C}ollaborative \underline{P}ublic \underline{R}esource \underline{A}llocation (CPRA) problem as shown in \textbf{Fig.~\ref{fig:example}}, which explicitly models the capacity constraints and spatio-temporal dynamics in real-world public resource systems. Compared to dynamic public resource allocation, CPRA involves a more complex state space, presenting two core challenges:
(i) how to enable effective joint scheduling among all resources without falling into suboptimal solutions; and (ii) how to capture complex spatio-temporal demand dynamics for proactive, high-coverage decision-making.

To address these challenges, we propose a new framework called \underline{G}ame-Theoretic \underline{S}patio-\underline{T}emporal \underline{R}einforcement \underline{L}earning (GSTRL) for solving CPRA. Targeting the first challenge, we prove that the CPRA problem can be formulated as a \emph{Potential Game}, and then guide the reward learning rule for the system based on potential function,
providing theoretical guarantees that aligning individual incentives can achieve socially optimal outcomes. To tackle the second challenge, GSTRL integrates a spatio-temporal demand forecasting module and adaptive policy learning, enabling dispatch strategies under real-world constraints.
Our major contributions can be summarized as follows:
\vspace{-0.3em}
\begin{itemize}[leftmargin=*]

    \item \textit{A novel perspective from the game theory}: To the best of our knowledge, we are the first to introduce the CPRA problem, considering \textit{capacity constraints} and \textit{agent collaboration}. By formulating it as a potential game model with a constructed potential function to capture collaboration dynamics, we provide theoretical guarantees for accelerating convergence towards a near-Nash equilibrium.
    

    \item \textit{A new framework for CPRA}: We propose GSTRL, a reinforcement learning (RL) framework to solve the CPRA problem. By leveraging dynamically updated and incorporating a series of feature extraction components, GSTRL effectively captures \textit{spatio-temporal dynamics} from predicted crowd flow and public resources.


    \item \textit{Comprehensive empirical evidence}: Extensive experiments on two real-world datasets show that GSTRL outperforms state-of-the-art baselines by up to 40\%, demonstrating its superior performance on the CPRA problem across various parameter settings. Ablation and convergence experiments further highlight the role of its key components in enhancing learning effectiveness.

\end{itemize}

\section{Overview}


In this section, we give some preliminaries, then we formally define CPRA and prove its NP-hardness for the first time.

\subsection{Preliminaries}

\begin{definition}[Location]
We uniformly partition the area of interest into $N$ grid locations, denoted by $\mathcal{G} = \left\{ g_i \right\}$.
\end{definition}

\begin{definition}[Public Resource]
At any time interval $T_t$, the resource $\mathbf{m}_k$ is represented as a triple $\mathbf{m}_t =\left( l_{k}^{t}, e_{k}^{t}, p_{k}^{t} \right)$, where $l_{k}^{t} \in \mathcal{G}$ denotes its current location, $e_{k}^{t}$ represents its remaining energy, and $p_{k}^{t}$ indicates its service capacity.
\end{definition}

\begin{definition}[Energy Cost]
When a resource moves from $g_i$ to $g_j$, the energy cost is denoted by $c_{i,j}$. We treat all resources of the same type, making energy cost solely dependent on Euclidean travel distance~\cite{Ruan2020pra}.
\end{definition}

\begin{definition}[Service Capacity]
The service capacity of a resource $k$ at time $t$, denoted by $p_{k}^{t}$, measures the crowd flow that the resource can serve at time $t$. We assume that the service capacity is identical for all resources and is updated at each time step, such that $p_{k}^{t+1} = p_{k}^{t} = p_{k+1}^{t}$. 
\end{definition}

\begin{definition}[Resource Depot]
A resource depot $g_{\omega} \in \mathcal{G}$ is static. All public resources must start and end their journey at this resource depot during the scheduling process.
\end{definition}

\begin{definition}[Crowd Coverage]
The crowd coverage during the time interval $T_t$ is defined as the total number of people successfully served by all public resources across the entire area. Let $u_{i,k}^{t}$ indicate the presence of public resource $m_k$ in grid cell $i$ at time $t$, and let $\lambda_{i}^{t}$ represent the crowd flow in grid $i$ at time $t$. Consequently, the crowd coverage during the time interval $T_t$ can be expressed as:
{\footnotesize
\begin{equation}
  C_t = \sum_{k \in \mathcal{K}} \sum_{g_i \in G} \left( u_{i,k}^{t} \cdot \min \left\{ \lambda_{i}^{t}, p_{k}^{t} \right\} \right).
    \label{coverage}
\end{equation}
}
\end{definition}

\subsection{Problem Statement}

Given a set of public resources $\mathcal{K} = \left\{ k = 1, \cdots, k_{\max} \right\}$, a set of service time intervals $\mathcal{T} = \left\{ T_t \mid t = 1, \cdots, t_{\max} \right\}$, an initial energy $E$, and the location of the resource depot $g_{\omega}$, our objective is to find a scheduling strategy that maximizes long-term crowd coverage while adhering to energy limits and ensuring that all public resources start and end their journeys at the resource depot. 
Since most resources operate in fixed positions, decision-making time dominates travel time, which is thus neglected.


Based on crowd flow distributions, this optimization problem is formulated as an Integer Linear Programming (ILP) problem with respect to the decision variable $u_{i,k}^{t}$. To facilitate the solution, we introduce an auxiliary variable $x_{i,j,k}^{t}$, which indicates whether resource $m_k$ travels from location $g_i$ to location $g_j$ at the beginning of $T_t$:
{\footnotesize
\begin{equation}
\sum_{g_h \in G}{x_{h,i,k}^{t}} = \sum_{g_j \in G}{x_{i,j,k}^{t+1}} = u_{i,k}^{t},  \forall g_i \in \mathcal{G}, T_t \in \mathcal{T}, k \in \mathcal{K}.
    \label{variable}
\end{equation}
}

The formulation of CPRA is expressed as follows:
{\footnotesize
\begin{align}
 \mathcal{P}: \;& \underset{\left\{ u_{i,k}^{t} \right\}}{\max}\sum_{t\in \mathcal{T}}{\sum_{k\in \mathcal{K}}{\sum_{g_i\in \mathcal{G}}{\left( u_{i,k}^{t}\cdot \min \left\{ \lambda _{i}^{t}, p_{k}^{t} \right\} \right)}}}   \label{P0}\\
  s.t. \;\; 
  & C_1: \sum_{k\in \mathcal{K}}{\sum_{g_i\in \mathcal{G}\,\backslash\left\{ g_{\omega} \right\} ,g_j\in \mathcal{G}}{x_{i,j,k}^{t}}}=0, \tag{\ref{P0}{a}}\\
  & C_2: \sum_{k\in \mathcal{K}}{\sum_{g_j\in \mathcal{G}}{x_{\omega ,j,k}^{1}}}=\sum_{k\in \mathcal{K}}{\sum_{g_i\in \mathcal{G}}{x_{i,\omega ,k}^{n+1}}}=k_{\max}, \tag{\ref{P0}{b}}\\
  & C_3: \sum_{k\in \mathcal{K}}{u_{i,k}^{t}}\leq \frac{\sum_{g_i\in \mathcal{G}}{\lambda _{i}^{t}}}{p_{k}^{t}}, \forall g_i\in \mathcal{G}, T_t\in \mathcal{T},\tag{\ref{P0}{c}}\\
  & C_4: \sum_{t\in \mathcal{T}\cup \left\{ t_{n+1} \right\}}{\sum_{g_i,g_j\in \mathcal{G}}{c_{i,j}\cdot x_{i,j,k}^{t}}}\leq E, \forall k\in \mathcal{K},\tag{\ref{P0}{d}}
\end{align}
}

\noindent where constraint $C_1$ ensures all routes start from the resource depot $g_{\omega}$. $C_2$ enforces that each resource starts and ends at $g_{\omega}$, occupying a location at both $T_1$ and $T_{t_{\max}}$. $C_3$ limits the number of resources in $g_i$ at time $T_t$, preventing trivial stationary solutions. $C_4$ ensures each resource has sufficient energy $E$ to return to the depot.

\begin{theorem}
The CPRA problem $\mathcal{P}$ is NP-hard.
\end{theorem}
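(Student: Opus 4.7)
The plan is to establish NP-hardness by a polynomial-time reduction from a canonical profit-maximizing routing problem. A natural source is the Orienteering Problem (OP), which asks whether there exists a closed walk rooted at a designated depot, of total edge cost at most $B$, that collects cumulative node profit at least $K$; OP is classically NP-hard. CPRA can be viewed as a strict generalization of OP that additionally enforces capacity limits, spatio-temporal demand, and multi-agent coordination, so the embedding should be direct once the latter features are rendered inert.

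Concretely, given an OP instance $(V,E,w,r,v_0,B,K)$, I would construct a CPRA instance by identifying $\mathcal{G}$ with $V$, setting $g_{\omega}=v_0$, $k_{\max}=1$, $c_{ij}=w_{ij}$, $E=B$, $t_{\max}=|V|$, and $p_k^t>\max_i r_i$ so that $\min\{\lambda_i^t,p_k^t\}=\lambda_i^t$ trivially. The crux is then encoding the node profits $r_i$ into the crowd profile $\{\lambda_i^t\}$, which is also where the main obstacle lies: CPRA's objective sums reward over \emph{every} time step a resource occupies a cell, whereas OP collects each vertex's profit at most once. I plan to neutralize this gap by time-indexing the rewards, assigning each vertex $v_i$ a distinct slot $t_i$ and setting $\lambda_i^{t_i}=r_i$ with $\lambda_i^{t}=0$ for all other $t$, so that revisits and prolonged stays contribute nothing further. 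An equivalent device is to work on the time-expanded DAG over $V\times\{1,\ldots,t_{\max}\}$, which structurally forbids double-counting.

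It remains to verify the reduction in both directions. Any OP closed walk of cost $\le B$ and profit $\ge K$ lifts to a CPRA schedule attaining coverage $\ge K$ by inserting zero-cost idling (since $c_{ii}=0$) so that the traversal aligns with the prescribed slots $\{t_i\}$, with the endpoint constraint $C_2$ automatically satisfied and $C_4$ inherited from the OP budget. Conversely, any feasible CPRA schedule satisfying constraints $C_1$ through $C_4$ projects onto a closed walk from $v_0$ of total cost at most $E=B$ whose realized profit equals the achieved coverage. Since this construction is polynomial in $|V|+|E|$, NP-hardness of OP transfers to $\mathcal{P}$. The most delicate step will be choosing the assignment $\{t_i\}$ so that every OP tour, regardless of its visit order, remains realizable within $t_{\max}$ steps without conflict; this is handled by spacing the $t_i$'s widely enough and exploiting the free-idling option at any cell the resource has already reached.
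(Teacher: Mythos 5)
Your reduction is from a genuinely different source problem than the paper's (the paper reduces from the Multiple Knapsack Problem, collapsing CPRA to a single time interval so that all temporal and routing structure disappears and only the capacity/assignment structure remains), but your construction from the Orienteering Problem has two concrete gaps. The first is the time-slot assignment. Fixing a distinct slot $t_i$ for each vertex \emph{before} seeing the tour imposes a visit order: to collect both $r_i$ and $r_j$ with $t_i < t_j$, the resource must occupy $v_i$ strictly before $v_j$. An OP-optimal closed walk may visit the vertices in an order inconsistent with any pre-assigned slot ordering (and with its reverse), and permuting the visit order of a walk changes its cost, so you cannot in general realign the tour to the slots within the budget $B$. ``Spacing the $t_i$ widely'' and free idling let you stretch a tour in time, but they cannot repair an ordering conflict; hence the forward direction (OP profit $\ge K$ implies CPRA coverage $\ge K$) fails. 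This is not a technicality you can defer --- it is the heart of why your ``neutralize double-counting by time-indexing'' device does not work, and the time-expanded DAG reformulation inherits exactly the same problem.

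The second gap is that your parameter choices violate constraint $C_3$. You set $p_k^t > \max_i r_i$ and make $\lambda^{t}$ supported on a single vertex at each profitable time step, so at time $t_i$ the right-hand side of $C_3$ is $\sum_{g_j}\lambda_j^{t_i}/p_k^{t_i} = r_i/p_k^{t_i} < 1$. Since $\sum_k u_{j,k}^{t_i}$ is integral, $C_3$ then forces it to $0$ for every cell, i.e., no resource may occupy any cell at any time step carrying profit, and the constructed instance has optimal coverage $0$ regardless of the OP instance. Any fix (lowering $p$ so that occupancy is permitted) distorts the profits through the $\min\{\lambda_i^t, p_k^t\}$ truncation. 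The paper's MKP reduction avoids both issues by never invoking the temporal dimension at all; if you want to salvage a routing-based reduction, you would need a construction in which the reward of a cell is order-independent and $C_3$ is verified explicitly, which your current proposal does not provide.
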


Due to space limitations, we provide the proof in \textbf{Appendix}, where we demonstrate that the CPRA problem $\mathcal{P}$ is computationally intractable, underscoring the need for advanced algorithmic approaches.

\section{A Game-Theoretic View on CPRA}

To design an effective algorithm for the NP-hard CPRA problem, we analyze the relationship between individual agents and the overall system in CPRA from the perspective of potential game theory. 

\subsection{Potential Game Model}
\label{game_theory}



We treat each public resource as a player and the combination of resources and crowd flow as a system, the CPRA problem is modeled as a game $ G = \{ \mathcal{K}, \mathcal{S}, \{U_k\}_{k \in \mathcal{K}} \} $:
\begin{itemize}[leftmargin=*]
    \item $ \mathcal{K} $ is the set of players (public resources).
    \item $ \mathcal{S} = S_1 \times S_2 \times \dots \times S_{k_{max}} $ represents the joint strategy space of the game, with $ S_k $ being the set of all possible strategies for public resource $ m_k $.
    \item $ U_k $ is the utility function of public resource $m_k$ .
\end{itemize}
Each strategy profile $ \zeta \in \mathcal{S} $ is defined as $ \zeta = (\zeta_1, \zeta_2, \dots, \zeta_{k_{max}}) $, which can also be expressed as $ \zeta = (\zeta_k, \zeta_{-k}) $, where $ \zeta_{-k} $ represents the strategies of all resources except $ m_k $. The goal of each resource is to maximize its utility $ U_k(\zeta) $ based on the allocation strategy.

\begin{definition}[Utility function of public resource $ m_k $]
The utility of public resource $ m_k $ under strategy $ \zeta $ is defined as the sum of the crowd coverage achieved at time $ t $:
{\footnotesize
\begin{equation}
    U_k(\zeta )=\sum_{g_i\in \mathcal{G}}{\left( u_{i,k}^{t}\cdot \min \left\{ \lambda _{i}^{t},p_{k}^{t} \right\} \right)}.
\end{equation}
}
\end{definition}

\begin{definition}[Potential Game]
A game $ G = \{ \mathcal{K}, \mathcal{S}, \{U_k\}_{k \in \mathcal{K}} \} $ is a \textbf{potential game} if there exists a potential function $\Phi :\mathcal{S}\rightarrow \mathbb{R}$ such that for all $ k \in \mathcal{K} $, all $ \zeta_{-k} \in S_{-k} $, and all $ \zeta_k, \zeta_k' \in S_k $,
{\footnotesize
\[
U_k(\zeta_k, \zeta_{-k}) - U_k(\zeta_k', \zeta_{-k}) = \Phi(\zeta_k, \zeta_{-k}) - \Phi(\zeta_k', \zeta_{-k}).
\]
}
\end{definition}

\begin{theorem}
The game $ G = \{ \mathcal{K}, \mathcal{S}, \{U_k\}_{k \in \mathcal{K}} \} $ is a potential game with the potential function:
{\footnotesize
\begin{equation}
\Phi(\zeta) = \sum_{k\in \mathcal{K}}{\sum_{g_i\in \mathcal{G}}{\left( u_{i,k}^{t}\cdot \min \left\{ \lambda _{i}^{t},p_{k}^{t} \right\} \right)}}.
\end{equation}
}
\end{theorem}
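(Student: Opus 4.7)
The plan is to verify the defining identity of a potential game by direct computation, leveraging the observation that each $U_{k'}$ depends only on $\zeta_{k'}$ and not on the strategies of the other players.

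First, I would rewrite the proposed potential as $\Phi(\zeta) = \sum_{k'\in\mathcal{K}} U_{k'}(\zeta)$, since the inner sum over $g_i$ appearing in the definition of $\Phi$ is exactly $U_{k'}(\zeta)$. Then, for a unilateral deviation of player $m_k$ from $\zeta_k$ to $\zeta_k'$ with $\zeta_{-k}$ held fixed, I would write
$$\Phi(\zeta_k,\zeta_{-k}) - \Phi(\zeta_k',\zeta_{-k}) = \sum_{k'\in\mathcal{K}} \bigl[\, U_{k'}(\zeta_k,\zeta_{-k}) - U_{k'}(\zeta_k',\zeta_{-k}) \,\bigr].$$

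Next, I would argue that every term with $k'\neq k$ vanishes. The utility $U_{k'}$ is a weighted sum of the location indicators $u_{i,k'}^{t}$, which encode only the grid position of resource $m_{k'}$ at time $t$ and are therefore determined entirely by $\zeta_{k'}$; the weights $\min\{\lambda_i^{t},p_{k'}^{t}\}$ are exogenous (observed crowd flow and fixed service capacity). Hence varying $\zeta_k$ while holding $\zeta_{-k}$ fixed cannot change $U_{k'}$ for any $k'\neq k$, and only the $k'=k$ summand survives. This yields $\Phi(\zeta_k,\zeta_{-k})-\Phi(\zeta_k',\zeta_{-k}) = U_k(\zeta_k,\zeta_{-k})-U_k(\zeta_k',\zeta_{-k})$, which is exactly the potential-game condition.

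The main subtle point---rather than a genuine obstacle---is recognizing that the utility model deliberately omits any competitive effect: two resources co-located at the same $g_i$ each independently receive the reward $\min\{\lambda_i^{t},p_{k'}^{t}\}$, so the utilities decompose additively across players. This team-game structure is precisely what makes the sum of utilities a valid potential function and, simultaneously, makes $\Phi(\zeta)$ coincide with the per-time-step ILP objective of problem $\mathcal{P}$, which is the zero-gap property the paper invokes to justify approximating the social optimum via the Nash equilibrium.
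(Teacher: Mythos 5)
Your proof is correct and follows essentially the same route as the paper: identify $\Phi$ as the sum of all utilities and observe that a unilateral deviation by $m_k$ changes only the $k$-th summand. You additionally spell out the key independence fact (that $U_{k'}$ depends only on $\zeta_{k'}$ because the utilities contain no congestion or competition term), which the paper's proof asserts implicitly rather than justifying.
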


\begin{proof}
The potential function $ \Phi(\zeta) $ is defined as the total crowd coverage achieved by all public resources:
{\footnotesize
\[
\Phi(\zeta) = \sum_{k \in \mathcal{K}} U_k(\zeta) = \sum_{k\in \mathcal{K}}{\sum_{g_i\in \mathcal{G}}{\left( u_{i,k}^{t}\cdot \min \left\{ \lambda _{i}^{t},p_{k}^{t} \right\} \right)}}.
\]
}

\noindent Now, consider the change in the strategy of a single resource $ m_k $ from $ \zeta_k $ to $ \zeta_k' $, while all other strategies $ \zeta_{-k} $ remain fixed. The resulting change in the potential function is:
{\footnotesize
\[
\Phi(\zeta_k, \zeta_{-k}) - \Phi(\zeta_k', \zeta_{-k}) = U_k(\zeta_k, \zeta_{-k}) - U_k(\zeta_k', \zeta_{-k}).
\]
}
This confirms that the difference in the potential function directly reflects the change in the utility of the public resource $ m_k $, satisfying the conditions for a potential game.
\end{proof}

\begin{theorem} 
\label{theorem3}
Every finite potential game has at least one pure-strategy Nash equilibrium.
\end{theorem}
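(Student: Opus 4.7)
The plan is to leverage the potential function $\Phi$ provided by the definition of a potential game, reducing the multi-agent equilibrium question to a single-objective optimization problem. Since the game is finite, the joint strategy space $\mathcal{S} = S_1 \times \cdots \times S_{k_{\max}}$ is a finite set, so $\Phi : \mathcal{S} \to \mathbb{R}$ must attain its maximum. My claim will be that any maximizer of $\Phi$ is already a pure-strategy Nash equilibrium, which immediately yields existence without any fixed-point machinery.

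The steps I would carry out in order are as follows. First, invoke finiteness of $\mathcal{S}$ to select $\zeta^{*} = (\zeta_1^{*}, \dots, \zeta_{k_{\max}}^{*}) \in \arg\max_{\zeta \in \mathcal{S}} \Phi(\zeta)$. Second, fix an arbitrary player $k \in \mathcal{K}$ and an arbitrary alternative strategy $\zeta_k' \in S_k$, and apply the defining identity of a potential game to get $U_k(\zeta_k^{*}, \zeta_{-k}^{*}) - U_k(\zeta_k', \zeta_{-k}^{*}) = \Phi(\zeta_k^{*}, \zeta_{-k}^{*}) - \Phi(\zeta_k', \zeta_{-k}^{*})$. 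Third, note that $(\zeta_k', \zeta_{-k}^{*}) \in \mathcal{S}$, so by global maximality of $\zeta^{*}$ the right-hand side is nonnegative, giving $U_k(\zeta_k^{*}, \zeta_{-k}^{*}) \geq U_k(\zeta_k', \zeta_{-k}^{*})$. Finally, since $k$ and $\zeta_k'$ were arbitrary, no player has a profitable unilateral deviation from $\zeta^{*}$, which is precisely the definition of a pure-strategy Nash equilibrium.

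There is no serious mathematical obstacle here: the potential function collapses the strategic interaction into a scalar objective, and finiteness plays the role of compactness needed to guarantee a maximizer. The only subtlety worth flagging in the writeup is that the inequality must follow from \emph{global} maximality of $\Phi$ (not merely coordinate-wise optimality), but because $(\zeta_k', \zeta_{-k}^{*})$ lies in $\mathcal{S}$ this is automatic. I would also close by connecting the argument back to our setting: since Theorem~2 established that the CPRA game admits the exact potential $\Phi$ equal to the total crowd coverage, the existence result applies directly, and the Nash equilibrium identified this way coincides with a maximizer of the system-level objective, reinforcing the earlier claim that there is no gap between the potential function and the optimal target.
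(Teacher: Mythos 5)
Your proof is correct and follows essentially the same route as the paper's: take a global maximizer $\zeta^{*}$ of $\Phi$ over the finite strategy space and use the potential-game identity to conclude that no unilateral deviation can be profitable. Your writeup is in fact slightly more careful than the paper's appendix version, which compresses the deviation argument into one sentence (and contains a slip, referring to the ``minimality'' of $\Phi(\zeta^{*})$ where maximality is meant).
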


Due to space limit, we leave the proof in \textbf{Appendix}.

\subsection{Further Discussion}


Guided by \textbf{Theorem~\ref{theorem3}}, we employ a RL algorithm to address this NP-hard problem. We adopt the potential function as the reward function, establishing theoretical guarantees for the algorithm's convergence and capacity to approximate the optimal solution.

For the CPRA problem, we further demonstrate that there exists no gap between the optimal objective and the Nash equilibrium. From a system-level perspective, this ensures that maximizing each player's utility inherently maximizes overall system performance, leading to no resource redundancy. From a decision-making perspective, this alignment implies that information sharing does not introduce bias. Motivated by these insights, we adopt a centralized decision-making framework in which the entire system learns an optimal strategy under a unified reward structure.

\section{Methodology}

\begin{figure*}[!ht]
\centering
\vspace{-0.5em}
\includegraphics[width=0.9\textwidth]{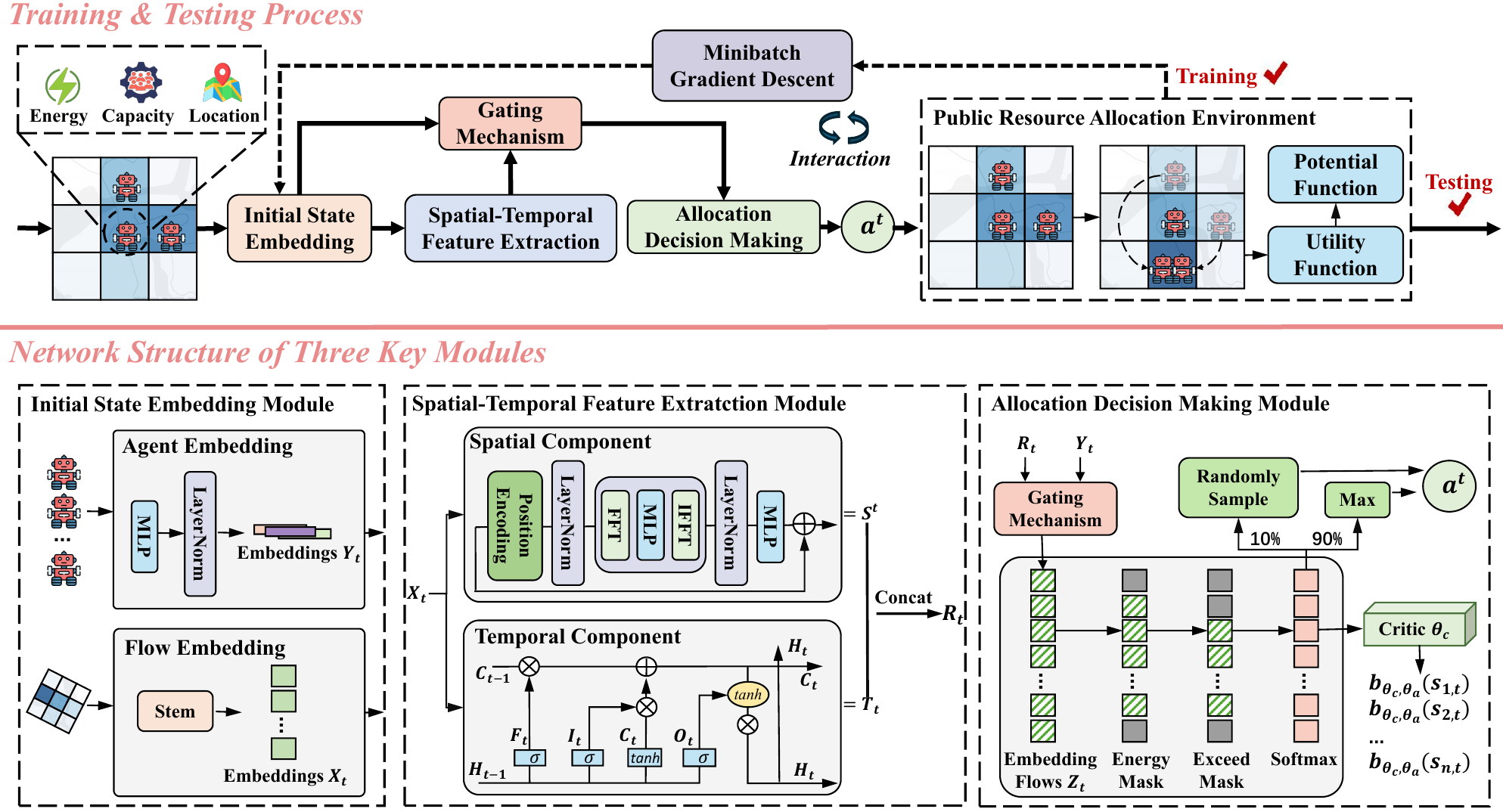}
\vspace{-0.5em}
\caption{Overall architecture of the proposed GSTRL framework.}\label{sua}
\vspace{-1.5em}
\end{figure*}

\subsection{Framework Overview}

Given a distribution of crowd flows and resources, our goal is to design allocation strategies that maximize long-term coverage by placing resources in the optimal locations at each time step. \textbf{Fig.~\ref{sua}} illustrates the framework, which mainly consists of three modules:
\vspace{-0.3em}
\begin{itemize}[leftmargin=*]
    \item \textbf{Initial State Embedding}: This module is divided into two components: crowd flow embedding and agent embedding. First, we flatten the predicted flows and use an MLP layer to process the initial state of public resources~\cite{wang2025air}. Then, a Stem and LayerNorm component is applied to extract high-level features (denoted as $\mathbf{X_t}$ and $\mathbf{Y_t}$), where \( t \in\{ t_1, t_2, \dots, t_{\text{max}}\} \).
    \item \textbf{Spatio-Temporal Feature Extraction}: Starting with the crowd flow embedding output \( X_t \), this module uses three Bidirectional LSTMs and a Fourier Neural Operator to capture the spatial and temporal dynamics of \( X_t \). The spatial feature \( S^t \) and temporal feature \( T_t \) are then concatenated as \( R_t \) for further processing.
    \item \textbf{Allocation Decision Making}: We first apply a gating mechanism to integrate the spatio-temporal embedding of predicted flows \( R_t \) with the public resources embedding \( Y_t \), followed by a mask criterion to ensure the strategies comply with practical constraints. A softmax layer is then used to calculate the score for each action \( Q_k \), which is executed accordingly.
\end{itemize}
\vspace{-0.3em}
During training, agents interact with the environment and get experience. We sample mini-batches from the experience and refine the strategy using mini-batch gradient descent until convergence~\cite{qi2023min}.

\subsection{CPRA Modeled as Markov Decision Process}

To capture long-term effects and avoid myopic decisions, we formulate the resource allocation process as a Markov Decision Process (MDP). It consists of four components: states, actions, transitions, and rewards.

\noindent\textbf{a) State.}
\label{subsubsec:state}
We denote the state at time $t$ by $s_{t} = (\mathbf{d}_{t}, \mathcal{M}_{t})$, where $\mathbf{d}_{t} = \{\lambda_{i}^{t}\}_{g_i \in \mathcal{G}}$ is the set of predicted flows in each grid, and 
$\mathcal{M}_{t} = \{(l_{k}^{t}, e_{k}^{t}, p_{k}^{t})\}_{k=1}^{k_{\max}}$ is the set of resources.

\noindent\textbf{b) Action.}
\label{subsubsec:action}
At time step $t$, the action $a_{t} = \left( g_{i}^{t}, \mathbf{m}_{k}^{t}\right)$ means that we allocate resource $\mathbf{m}_{k}$ to grid $g_{i}$.

\noindent\textbf{c) Transitions.}
\label{subsubsec:transitions}
Once an action $a_{t}$ is taken, the system deterministically transitions to $s_{t+1}$ based on $s_{t}$ and $a_{t}$.

\noindent\textbf{d) Reward.}
\label{subsubsec:reward}
Based on potential game theory, we adopt the potential function $\Phi(\zeta)$ as the reward function, which is
$ r_t \;=\; \sum_{k \in \mathcal{K}}\!\sum_{g_i \in \mathcal{G}} \bigl(u_{i,k}^{t} \cdot \min\{\lambda_i^{t}, p_{k}^{t}\}\bigr)$ at time $t$.
By using a global optimization function instead of individual utility functions for each resource, GSTRL can capture the agent collaboration dynamics more effectively so as to accelerate convergence toward a near-Nash equilibrium.

\vspace{-0.5em}
\subsection{Initial State Embedding }

We design an Initial State Embedding Module to convert predicted crowd flows and resource states into high-level features for subsequent learning. This involves two parts: Crowd Flow Embedding and Agent Embedding.

\vspace{-0.5em}
\subsubsection{Flow Embedding.}


Input data reflects the dynamics of crowd flows. After flattening, we feed the data into a stem module with $1\times1$ convolutions to extract a feature map $\mathbf{X_t}$.

\subsubsection{Agent Embedding.}

Each resource is denoted by $(l_{k}^{t},\,e_{k}^{t},\,p_{k}^{t})$. We project the 3-dimensional vector into a higher-dimensional space via a fully connected layer, followed by LayerNorm. We denote the preliminary embedding resulting from each public resource by $\mathbf{Y_t}$.

\subsection{Spatio-Temporal Feature Extraction}


Our module refines the features from the Crowd Flow Embedding via two main components: a Temporal Component and a Spatial Component.

\subsubsection{Temporal Component.}

We adopt a BiLSTM~\cite{Hochreiter1997LSTM} to capture long-term temporal patterns in crowd flow. Unlike unidirectional models, BiLSTM processes each time step in both forward and backward directions. This process yields hidden states $\overrightarrow{\mathbf{H}}_t$ and $\overleftarrow{\mathbf{H}}_t$. Concatenating them produces a comprehensive temporal embedding $\mathbf{T}_t$, which helps avoid suboptimal dispatching~\cite{hu2024tis, hu2024att}. We leave the details in \textbf{Appendix}.

\subsubsection{Spatial Component.}

We adopt FNO~\cite{Li2020FNO} to capture global correlations in the frequency domain with reduced complexity. Let $\mathbf{X}_t \in \mathbb{R}^{N \times E}$ denote the spatial feature map at time $t$. After position encoding and LayerNorm, we transform $\mathbf{X}_t$ via FFT, multiply by a learnable weight matrix $\hat{\mathbf{W}}$, and apply the inverse FFT:
{\footnotesize
\begin{equation}
\mathbf{s}^{t}_{i} = 
\mathrm{IFFT}\bigl(\hat{\mathbf{W}}\cdot\mathrm{FFT}(\mathbf{X}_t)\bigr)_{i},
\end{equation}
}

\noindent where $i$ indexes the grids, and the structure of $\hat{W}$ reduces parameter count. This yields spatial embeddings $\mathbf{S}^t = \{\mathbf{s}^{t}_{1}, \dots, \mathbf{s}^{t}_{N}\}$ in $\mathcal{O}(NE \log N)$ time, where $N$ is the number of grids and $E$ the embedding dimension.


\subsubsection{Output.}
We concatenate \(\mathbf{T}_{t}\) and \(\mathbf{S}^{t}\) to form the spatio-temporal embedding \(\mathbf{R}_t = \mathrm{Concat}(\mathbf{T}_{t},\, \mathbf{S}^{t})\), which captures both time-series trends and cross-grid interactions. \(\mathbf{R}_t\) is then passed to the Allocation Decision Making Module, enabling resource allocation under capacity constraints and fluctuating demands~\cite{liang2021tra}.


\subsection{Allocation Decision Making}

This module integrates the spatio-temporal embedding $R_t$ and the resource embedding $Y_t$ via a gating mechanism:
{\footnotesize
\begin{equation}\label{eq:Jt}
 \mathbf{J}_t \;=\; \sigma\bigl(\mathbf{W}_j \cdot [\,\mathbf{R}_t,\, \mathbf{Y}_t\,] + \mathbf{b}_j\bigr), \mathbf{Z}_t \;=\; \mathbf{J}_t \cdot \mathbf{R}_t \;+\; \bigl(1 - \mathbf{J}_t\bigr)\,\cdot\, \mathbf{Y}_t,
\end{equation}
}

\noindent where $\mathbf{W}_j$ and $\mathbf{b}_j$ are learnable parameters, and $\sigma(\cdot)$ is the sigmoid function. The resulting $\mathbf{Z}_t$ represents the overall feature for the system for subsequent decision-making.


When making decisions, the four constraints $C_1 \text{--} C_4$  must be satisfied. Although $C_1$ and $C_2$ are effective by design, $C_3$ and $C_4$ require restricting locations based on the state of each resource. We thus apply masking mechanisms to $\mathbf{Z}_t$, yielding a masked representation $\overline{\mathbf{Z}_t}$.


Finally, We decode $\overline{\mathbf{Z}_t}$ through a softmax layer
$\mathbf{Q}_k \;=\; \mathrm{Softmax}\bigl(\,\overline{\mathbf{Z}_t}\bigr)$. During training, the agent chooses the highest-probability action $90\%$ of the time and randomly explores $10\%$ of the time. The processed state is also passed to the Critic module $b_{\theta_c,\theta_a}(s_{i,t})$ for value estimation.

\subsection{Training Process}

To learn an effective allocation strategy, we adopt an Actor-Critic training framework~\cite{mao2023drl}. The procedural flow of the training process is depicted in \textbf{Appendix}.

\subsubsection{Actor-Critic Overview.}
Our Actor-Critic method comprises two main components: 
\begin{itemize}[leftmargin=*]
    \item \textbf{Actor (Policy Network):} The policy network, denoted as $\pi_{\theta_a}$, governs resource allocation decisions of GSTRL. 
    The Initial State Embedding Module, Spatio-Temporal Feature Extraction Module, and Allocation Decision Module are essential components of the policy network.
    \item \textbf{Critic (Value Network):} The value network approximates the value function \(V_{\pi_{\theta_a}}(s_{t})\), which evaluates state quality and guides the Actor towards higher-value actions. Implemented as a feed-forward network with a regression layer and parameterized by \(\theta_{c}\), the value network is trained alongside the Actor network but is not used after training.
\end{itemize}
By integrating these two components, GSTRL dynamically optimizes allocation strategies using mini-batch gradient descent. We leave the details of the definition of value functions and advantage in \textbf{Appendix}.


\subsubsection{Loss Functions.}
The Critic is optimized via:
{\footnotesize
\begin{equation}\label{e-q}
\mathcal{L}_{\theta_{c}}
=\frac{1}{N}\!\sum_{i=1}^{N}\,\sum_{t \in \mathcal{T}}
\mathrm{smoothL1}\bigl(b_{\theta_{c},\theta_{a}}(s_{i,t}) - r_{i,t}\bigr),
\end{equation}
}

\noindent where the predicted value $b_{\theta_{c}, \theta_{a}}(s_{i,t})$ is defined as a baseline to evaluate the value function.
The Actor is trained to maximize the expected advantage:
{\footnotesize
\begin{equation}\label{f-q}
\mathcal{L}_{\theta_a} 
=\frac{1}{N}\!\sum_{i=1}^{N}\,\sum_{t \in \mathcal{T}}
\log \pi_{\theta_a}(a_{i,t} \mid s_{i,t})\,
A_{\pi_{\theta_a}}(s_{i,t}, a_{i,t}),
\end{equation}
}

\noindent where $N$ denotes the number of sampled trajectories, $\pi_{\theta_a}$ is the policy, and $A_{\pi_{\theta_a}}(\cdot)$ is the advantage function.
We train the Actor and Critic jointly by combining their losses:
{\footnotesize
\begin{equation}\label{g-q}
\mathcal{L}_{\theta_{ac}} 
=\; \alpha\,\mathcal{L}_{\theta_a} 
+\;\beta\,\mathcal{L}_{\theta_{c}},
\end{equation}
}

\noindent where $\alpha$ and $\beta$ are hyperparameters controlling the trade-off between policy optimization and value estimation.

\section{Experiments}


\subsection{Experimental Settings}
\subsubsection{Datasets.}

We utilize two datasets, Happy Valley and TaxiBJ, for our capacitated resource scheduling task.

\begin{itemize}[leftmargin=*]
    \item \textbf{Happy Valley}: This dataset, obtained from Ruan et al.\cite{Ruan2020pra}, provides hourly gridded crowd flow observations for Beijing Happy Valley, a theme park covering $1.25 \times 10^5 m²$. The data spans from January 1, 2018, to October 31, 2018. We divide the park into a $51 \times 108$ grid for fine-grained analysis. The dataset is split into training (60\%), validation (20\%), and test (20\%) sets.
    \item \textbf{TaxiBJ}: This dataset, published by Zhang et al.\cite{zhang2017deep}, records taxi flow data across Beijing from July 1, 2013, to March 31, 2016. We represent the city as a $32 \times 32$ grid to model the spatial distribution of taxi movements. The dataset is partitioned into 70\% for training, 20\% for validation, and 10\% for testing.
\end{itemize}



\subsubsection{Baselines \& Evaluation Metrics.}
 
\begin{table*}[!t]
  \centering
  \tabcolsep=1.7mm 
  \vspace{-1em}
    {\footnotesize
    \begin{tabular}{c|c|l|c|cc|cc|cc|cc}
    \toprule
    \multicolumn{1}{c|}{} & \multicolumn{1}{c|}{\multirow{2}{*}{\textbf{Method}}} & \multicolumn{1}{l|}{\multirow{2}{*}{\textbf{Algorithms}}} & \multirow{2}{*}{\textbf{\#Param(K)}} & \multicolumn{2}{c|}{\textbf{E=30}} & \multicolumn{2}{c|}{\textbf{E=40}} & \multicolumn{2}{c|}{\textbf{E=50}} & \multicolumn{2}{c}{\textbf{E=60}} \\
    \cline{5-12}
    & & &  & \textbf{ADCC $\uparrow$} & \textbf{$\Delta$ $\uparrow$} & \textbf{ADCC $\uparrow$} & \textbf{$\Delta$ $\uparrow$} & \textbf{ADCC $\uparrow$} & \textbf{$\Delta$ $\uparrow$} & \textbf{ADCC $\uparrow$} & \textbf{$\Delta$ $\uparrow$} \\
    \hline
    \multirow{10}{*}{\rotatebox{90}{Happy Valley}} &\multirow{4}{*}{Heuristic} 
    & Static & - & 1,566 & -14\% & 1,673 & -20\% & 2,076 & -17\% & 2,106 & -13\% \\
    & & BB  & - & 1,717 & -6\% & 1,686 & -20\% & 2,128 & -15\% & 2,235 & -8\% \\
    & & MYOPIC  & - & 1,756 & -4\% & 1,789 & -15\% & 2,206 & -12\% & 2,289 & -6\% \\
    & & EADS  & - & \underline{1,831} & - & \underline{2,096} & - & \underline{2,493} & - & \underline{2,432} & - \\
    \cline{2-12}
    &\multirow{4}{*}{RL} 
    & REINFORCE  & 211 & 1,936{\tiny $\pm$8} & 6\% & 2,181{\tiny $\pm$3} & 4\% & 2,239{\tiny $\pm$5} & -10\% & 2,103{\tiny $\pm$10} & -14\% \\
    & & PPO & 220 & 1,956{\tiny $\pm$5} & 7\% & 2,203{\tiny $\pm$2} & 5\% & 2,336{\tiny $\pm$3} & -6\% & 2,167{\tiny $\pm$7} & -11\% \\
    & & SAC & 220 & 1,963{\tiny $\pm$5} & 7\% & 2,365{\tiny $\pm$3} & 13\% & 2,396{\tiny $\pm$3} & -4\% & 2,196{\tiny $\pm$6} & -14\% \\
    & & HEM & 20 & 1,657{\tiny $\pm$5} & -10\% & 1,896{\tiny $\pm$6} & -10\% & 2,136{\tiny $\pm$10} & -14\% & 2,176{\tiny $\pm$6} & -11\% \\
    \cline{2-12}\cline{2-12}
    \rowcolor{white} 
    &
& \cellcolor{background_gray} 
\textbf{GSTRL (ours)} & \cellcolor{background_gray} 
\textbf{216} & \cellcolor{background_gray} 
\textbf{2,298}{\tiny $\pm$16} & \cellcolor{background_gray} 
\textbf{26\%} & \cellcolor{background_gray} 
\textbf{2,530}{\tiny $\pm$5} & \cellcolor{background_gray} 
\textbf{20\%} & \cellcolor{background_gray} 
\textbf{2,585}{\tiny $\pm$8} & \cellcolor{background_gray} 
\textbf{4\%} & \cellcolor{background_gray} 
\textbf{2,517}{\tiny $\pm$23} & \cellcolor{background_gray} 
\textbf{3\%} \\
    \bottomrule
    \toprule
    \multicolumn{1}{c|}{} & \multicolumn{1}{c|}{\multirow{2}{*}{\textbf{Method}}} & \multicolumn{1}{l|}{\multirow{2}{*}{\textbf{Algorithms}}} & \multirow{2}{*}{\textbf{\#Param(K)}} & \multicolumn{2}{c|}{\textbf{E=30}} & \multicolumn{2}{c|}{\textbf{E=40}} & \multicolumn{2}{c|}{\textbf{E=50}} & \multicolumn{2}{c}{\textbf{E=60}} \\
    \cline{5-12}
    & & &  & \textbf{ADCC $\uparrow$} & \textbf{$\Delta$ $\uparrow$} & \textbf{ADCC $\uparrow$} & \textbf{$\Delta$ $\uparrow$} & \textbf{ADCC $\uparrow$} & \textbf{$\Delta$ $\uparrow$} & \textbf{ADCC $\uparrow$} & \textbf{$\Delta$ $\uparrow$} \\
    \hline
    \multirow{9}{*}{\rotatebox{90}{TaxiBJ}} &
    \multirow{3}{*}{Heuristic} 
    & Static & - & 19.3 & -92\% & 19.3 & -92\% & 19.3 & -92\% & 19.3 & -92\% \\
    & & MYOPIC  & - & 19.3 & -92\% & 19.3 & -92\% & 19.3 & -92\% & 19.3 & -92\% \\
    & & EADS  & - & \underline{258} & - & \underline{242} & - & \underline{238} & - & \underline{242} & - \\
    \cline{2-12}
    &\multirow{4}{*}{RL} 
    & REINFORCE  & 669 & 223{\tiny $\pm$6} & -14\% & 230{\tiny $\pm$2} & -15\% & 285{\tiny $\pm$5} & 20\% & 296{\tiny $\pm$6} & 22\% \\
    & & PPO & 691 & 229{\tiny $\pm$3} & -11\% & 241{\tiny $\pm$2} & -0.4\% & 293{\tiny $\pm$2} & 23\% & 302{\tiny $\pm$6} & 25\% \\
    & & SAC  & 697 & 230{\tiny $\pm$5} & -11\% & 246{\tiny $\pm$2} & 2\% & 295{\tiny $\pm$3} & 24\% & 305{\tiny $\pm$6} & 26\% \\
    & & HEM & 60 & 190{\tiny $\pm$3} & -26\% & 197{\tiny $\pm$2} & -19\% & 256{\tiny $\pm$6} & 8\% & 273{\tiny $\pm$5} & 13\% \\
    \cline{2-12}
    \rowcolor{white} 
    &
& \cellcolor{background_gray} 
\textbf{GSTRL (ours)} 
& \cellcolor{background_gray} 
\textbf{685} 
& \cellcolor{background_gray} 
\textbf{259}{\tiny $\pm$2} 
& \cellcolor{background_gray} 
\textbf{0.4\%} 
& \cellcolor{background_gray} 
\textbf{260}{\tiny $\pm$5} 
& \cellcolor{background_gray} 
\textbf{7\%} 
& \cellcolor{background_gray} 
\textbf{313}{\tiny $\pm$2} 
& \cellcolor{background_gray} 
\textbf{32\%} 
& \cellcolor{background_gray} 
\textbf{338}{\tiny $\pm$6} 
& \cellcolor{background_gray} 
\textbf{40\%} \\
    \bottomrule
    \end{tabular}
    }
\vspace{-0.5em}
\caption{Model comparison on the Happy Valley and TaxiBJ datasets. The $\Delta$ represents the improvements in ADCC compared to the EADS approach. Bold and underlined digits are the best and EADS approach values.}
  \label{tab:results}
  \vspace{-1.0em}
\end{table*}
To evaluate the CPRA problem, we implement several heuristic algorithms, such as Static, B\&B~\cite{Lawler1966opt}, MYOPIC~\cite{Ruan2020pra}, and EADS~\cite{Ruan2020pra}. Besides, as we are the first to study CPRA, we adapt state-of-the-art RL algorithms from related domains, such as REINFORCE~\cite{Zhang2020bas}, PPO~\cite{Zhu2025rlbas}, SAC~\cite{Zheng2024rlbas}, HEM~\cite{Wang2024rlbas}, 
for a comprehensive comparison. Notably, since each resource's state and action are independent, modeling them as individual agents in multi-agent RL would require local observations, which contradicts our problem setting. Thus, we exclude multi-agent RL baselines.

We follow previous work \cite{Ruan2020pra} and use average daily crowd coverage (ADCC) to assess model performance. Each method is run 5 times, and we report the average ADCC value along with the variance. The notation $\Delta$ indicates the relative improvement of ADCC over the EADS approach under varying initial energy $E$. 

\vspace{-0.5em}
\subsubsection{Implementation Details.}


The baseline methods are implemented in Python, with training conducted using PyTorch 2.1.3 on a single NVIDIA GeForce RTX 3090 GPU. The Adam optimizer is used with a batch size of 16, an initial learning rate of 0.05, and a hidden size of 32. Each agent's capacity is set to 10, and the number of resources is also set to 10 to assess system collaboration. The energy limit is explored within the range of 30 to 60.

\vspace{-0.7em}
\subsection{Model Comparison}
\vspace{-0.3em}

In this section, we compare the performance of GSTRL with existing baselines in terms of ADDC, using two real-world datasets as shown in \textbf{Table \ref{tab:results}}. Our model consistently outperforms all baselines under varying initial energy $E$, achieving an improvement of 3\% to 26\% in ADDC on the Happy Valley dataset, and 0.4\% to 40\% on the TaxiBJ dataset. This demonstrates the effectiveness of our method in tackling the CPRA problem in scenarios of different scales.


As the initial energy $E$ increases, we observe a general upward trend in the ADDC for all algorithms across both datasets. This is because higher energy expands the exploration space for public resources, allowing them to discover solutions closer to the global optimum for the entire region. This applies to both dynamic adjustment strategies in heuristic algorithms and RL methods.

Additionally, for the Happy Valley dataset, when $E$ reaches a certain level (e.g., $E = 50$), the increase in ADDC slows down, and the gap between our GSTRL method and other baselines narrows. This is because the limited spatial scope of the dataset means that at $E = 50$, most resources can already cover the entire area.

In contrast, for the TaxiBJ dataset, most RL methods show significant performance improvement as $E$ increases. However, heuristic algorithms perform poorly, with MYOPIC degenerating into the Static algorithm and EADS's performance gradually declining as $E$ increases. This is due to the larger spatial scale, higher grid size, and greater complexity of the TaxiBJ dataset, which allows RL methods to demonstrate better exploration strategies, while heuristic algorithms tend to get stuck in local optima.

\subsection{Ablation Study}

To assess the impact of each component on the performance of GSTRL, we perform ablation studies on two datasets.


\begin{table}[!t]
\footnotesize
\centering
\tabcolsep=1.7mm
\begin{tabular}{l||c|c|c|c|c|c}
\hline
{\multirow{2}{*}{\textbf{Variant}}} & \multicolumn{3}{c|}{\textbf{Happy Valley}} & \multicolumn{3}{c}{\textbf{TaxiBJ}} \\ \cline{2-7}
& \textbf{E=30} & \textbf{E=40} & \textbf{E=50} & \textbf{E=30} & \textbf{E=40} & \textbf{E=50} \\ \hline \hline
w/o AE                       & 2,016         & 2,021          & 2,121  & 217 & 219 & 257          \\
w/o S                       & 1,272         & 1,372          & 1,385   & 139 & 141 & 167         \\
w/o T                       & 2,158         & 2,216          & 2,300    & 227 & 228 & 273        \\
w/o PF   & 1,982  & 2,046  & 2,165 & 225  &  213  & 262   \\
w/o EM                       & 136         & 368          & 551       & 89 & 106 &  130    \\ \rowcolor{gray!20}
GSTRL                      & \textbf{2,298}         & \textbf{2,530}          & \textbf{2,585}   & \textbf{259} & \textbf{260} & \textbf{313}   \\ 
\hline
\end{tabular}
\vspace{-0.5em}
\caption{Ablation Study Results.}
\label{tab:ab}
\vspace{-1.5em}
\end{table}

\vspace{-0.3em}

\subsubsection{Effects of Feature Learning.}

To evaluate the effectiveness of components for feature learning, we conduct ablation studies by removing three key components: a) \textbf{Agent Embedding (w/o AE)}: the module that encodes agent-specific characteristics into the state representation. b) \textbf{Spatial Component (w/o S)}: the spatial feature extractor for modeling the geographical dynamics of crowd flows. c) \textbf{Temporal Component (w/o T)}: the module that captures the temporal dynamics of crowd flows. As shown in \textbf{Table~\ref{tab:ab}}, the absence of any component leads to a decline in performance, confirming their necessity in learning robust spatio-temporal representations of both crowd flows and public resources.

\vspace{-0.5em}
\subsubsection{Effects of Potential Function (w/o PF).}

We perform the ablation of the potential function by modifying the reward design, changing it from the potential-based reward to the maximum utility among all agents. As shown in \textbf{Table \ref{tab:ab}}, removing this component leads to a significant performance drop, 
highlighting its role 
in guiding effective learning. 

\vspace{-0.5em}
\subsubsection{Effects of Exceed Mask (w/o EM).}


The exceed mask is designed to prevent the GSTRL training from converging to suboptimal solutions, where a large number of agents crowd into a single grid because of the highly uneven distribution of crowd flow. As shown in \textbf{Table \ref{tab:ab}}, removing the exceed mask leads to a sharp performance drop across, demonstrating its contribution to ensuring effective training of GSTRL.

\vspace{-0.5em}
\subsection{Spatio-Temporal Dynamics Analysis}


To evaluate the ability of the spatial-temporal feature extraction module, 
we design five GSTRL variants by replacing this module. These include state-of-the-art models for spatio-temporal prediction, such as SIMVP~\cite{Tan2024sim}, which excels on grid-based data, and STDMAE~\cite{Gao2024bas}, which is designed for graph-structured data.

\begin{figure}[!ht]  
  \centering
  \vspace{-1em}
  \includegraphics[width=0.95\linewidth]{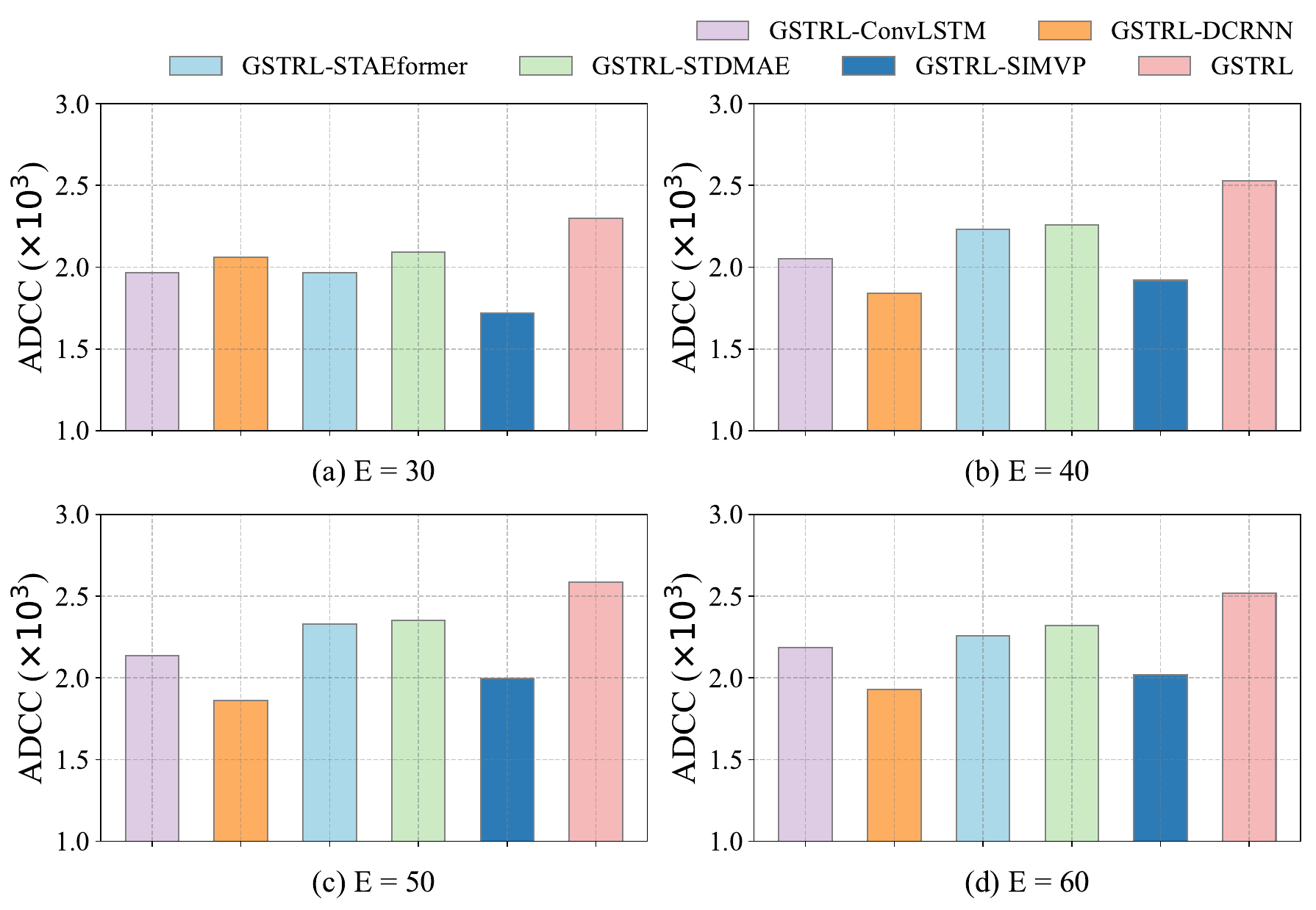}  
  \vspace{-0.5em}
  \caption{ADCC on different GSTRL variants.}  
  \label{fig:convergence}
\end{figure}

As shown in \textbf{Fig.~\ref{fig:convergence}}, GSTRL outperforms all variants across different initial energies. We attribute this to two main factors: a) Our data is grid-based, and adapting graph-based methods requires constructing an adjacency matrix, which can distort the intrinsic spatial characteristics of the grid. b) Compared to SIMVP, which extracts spatio-temporal features holistically, our module aggregates features at each time step. Given the structurally simple yet highly time-varying nature of crowd flow, this aggregation proves more effective, despite relying on classical components.


\subsection{Hyperparameters Study}

In this subsection, we evaluate the effectiveness of GSTRL using the Happy Valley dataset under different hyperparameters, which consist of model parameters and network parameters. The initial energy is set to 40. EADS, the state-of-the-art baseline for this dataset, is used for comparison.

\begin{figure}[!t]  
  \vspace{-0.5em}
  \centering
  \includegraphics[width=0.95\linewidth]{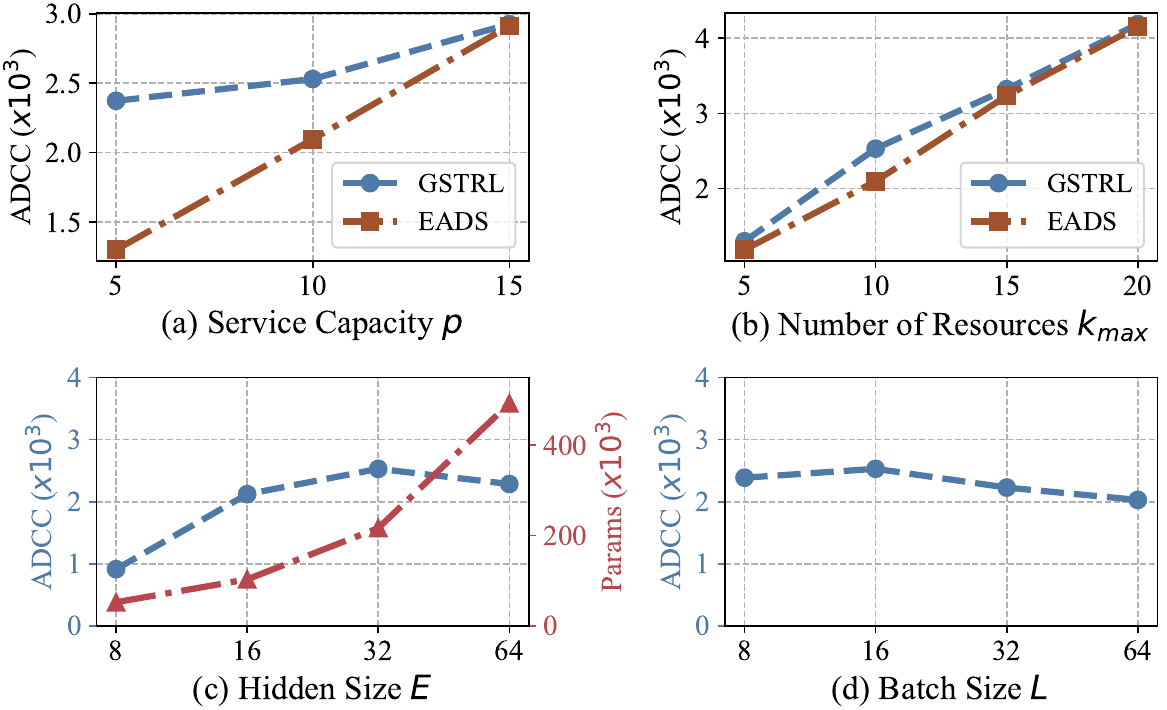}  
  \vspace{-0.5em}
  \caption{ADCC on different hidden sizes and batch sizes.}  
  \label{fig:hp_2}
  \vspace{-1.5em}
\end{figure}


\subsubsection{Service Capacity.}

\textbf{Fig.~\ref{fig:hp_2} (a)} illustrates how ADCC varies with different service capacities. As the service capacity increases, the performance gap between EADS and GSTRL has narrowed. However, when the service capacity is low, GSTRL significantly outperforms EADS, highlighting its advantage in multi-agent collaboration.

\vspace{-0.5em}
\subsubsection{Number of Resources.}

\textbf{Fig.~\ref{fig:hp_2} (b)} presents the ADCC for different numbers of resources. As shown, GSTRL outperforms EADS across all configurations. Furthermore, increasing the number of resources leads to greater ADCC, as more locations can be served simultaneously.

\vspace{-0.5em}
\subsubsection{Hidden Size and Batch Size.}

We investigate the impact of hidden size and batch size, with a service capacity of 10 and 10 resources. \textbf{Fig.~\ref{fig:hp_2} (c)} reveals that a small hidden size reduces performance, and a hidden size of 64 achieves comparable performance but incurs higher computational cost.

In \textbf{Fig.~\ref{fig:hp_2} (d)}, we examine the impact of batch size. The trend shows that increasing the batch size from 8 to 16 leads to improved performance, while increasing from 16 to 32 causes a slight decline, which suggests that a batch size of 16 may be optimal. The curve exhibits a smooth overall trend, indicating that GSTRL is relatively insensitive to variations in batch size, highlighting its strong robustness.

\vspace{-0.5em}
\subsection{Case Study}

In order to evaluate the explainability of our system, we perform the allocation and visualize the process from 10:00 AM to 1:00 PM on September 13, 2018. To provide a more intuitive view, we merge the grid cells into a 4x4 grid with unequal grid dimensions and plot a heat map based on the maximum crowd flow within the merged cells. The study is conducted with 5 resources, a service capacity of 10, and an energy limit of 40. We compare GSTRL with the state-of-the-art EADS baseline on the Happy Valley dataset.
As shown in \textbf{Fig.~\ref{fig:case_study}}, the process begins at 10:00 AM with five resources from the resource depot. By 1:00 PM, resources allocated using GSTRL are placed in areas of higher crowd density compared to those allocated with EADS. The visualization results clearly demonstrate the superior performance of GSTRL on the CPRA problem. In addition, we developed a CPRA system and visualized the deployment of GSTRL on the TaxiBJ dataset, as detailed in \textbf{Appendix}.

\begin{figure}[!t]  
  \vspace{-1em}
  \centering
  \includegraphics[width=0.95\linewidth]{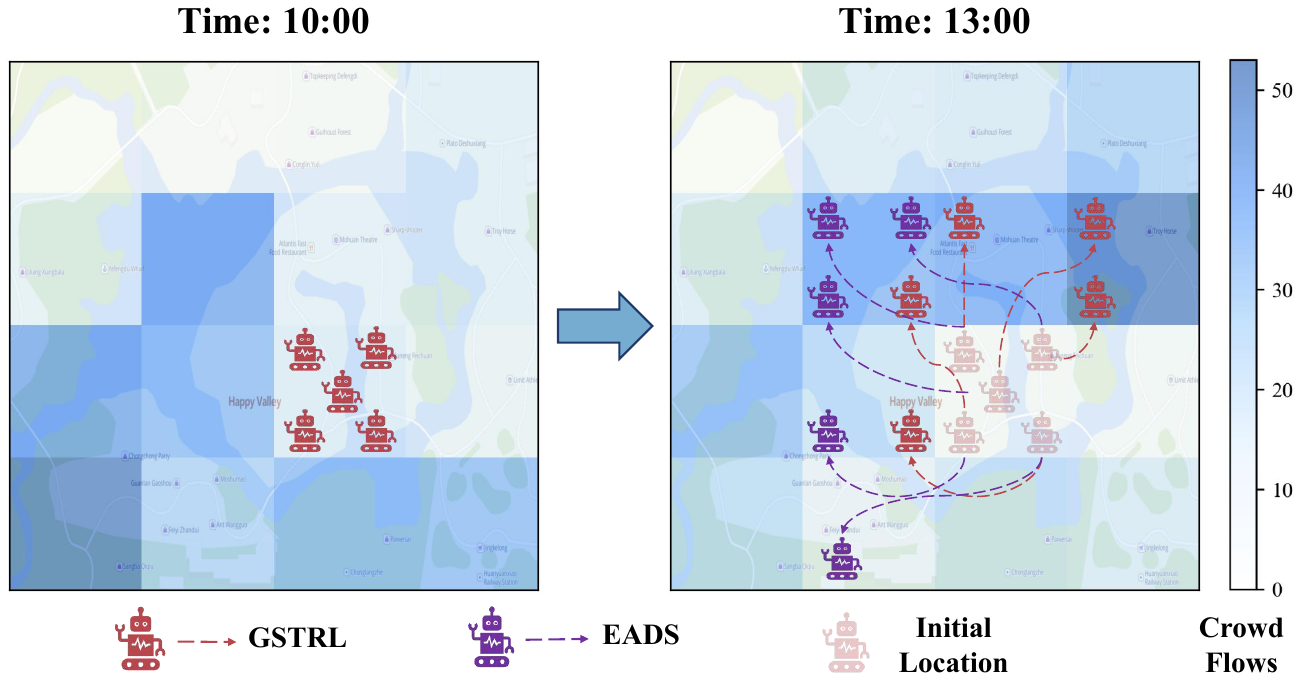}  
  \caption{Strategies visualization of GSTRL compared with EADS in Happy Valley dataset.}  
  \label{fig:case_study}
  \vspace{-1.5em}
\end{figure}

\section{Related Work}

Dynamic public resource allocation has been extensively studied in urban computing. Many studies focus on context-specific settings, such as ambulance deployment~\cite{Ji2019pra} and electric bus scheduling~\cite{Yan2024ref}. Although early heuristic algorithms, such as~\cite{bertsimas2019emergency} and~\cite{zhao2019adaptive}, offer basic solutions, they often perform well in constrained settings, and cannot adequately capture the dynamics of crowd flow on both spatial and temporal domains. Recent approaches~\cite{mohammed2022disaster, Ruan2020pra} incorporate real-time information for more adaptive scheduling, yet still lack consideration of future rewards, leading to suboptimal performance throughout the time horizon of the system. RL provides a powerful framework for optimizing long-term discounted return, with algorithms such as HEM~\cite{Wang2024rlbas}, 
and SAC~\cite{Zheng2024rlbas} being widely used in dynamic public resource allocation. 
However, in real-world scenarios, because of the limited service capacity of public resources and the strong spatio-temporal heterogeneity of crowd flows, densely concentrated flow in a specific region often requires coordinated service from multiple resources. Current RL approaches frequently converge to suboptimal solutions due to this complex decision space.
To address these limitations, we propose GSTRL, which employs potential game theory to model the state transitions of resources, and integrates related modules to capture system dynamics, so as to achieve near-global optimal decisions for CPRA.

\section{Conclusion and Future Work}

In this paper, we introduce the CPRA problem and model it as a potential game with a potential function to align individual utilities with global outcomes. We propose a GSTRL framework to solve this problem, which incorporates initial state embedding, spatio-temporal feature extraction, and related modules to capture the dynamics of the system. Extensive experiments on two real-world datasets demonstrate the superiority of GSTRL over baselines across varying parameter settings and the positive impact of its key components. In future work, we aim to optimize energy consumption and explore more on-policy learning scenarios jointly.

\bibliography{aaai2026}




\appendix

\section{Appendix}

\subsection{Proof of Theorem 1}
\label{app:np-hard}

\begin{proof}
To prove that problem $\mathcal{P}$ is NP-hard, we reduce a known NP-hard problem, the Multiple Knapsack Problem (MKP), to a special case of $\mathcal{P}$.

Consider a special case of problem $\mathcal{P}$ where:  
a) There is a single time interval, i.e., $|T| = 1$.  
b) Each grid cell $g_i \in G$ contains exactly one unit of crowd flow $\lambda_{i}^{t}$.  
c) Each public resource $m_k \in K$ corresponds to a knapsack.  
d) The service capacity $p_{k}^{t}$ of each public resource corresponds to the capacity of each knapsack.  
e) The decision variable $u_{i,k}^{t}$, which indicates whether the resource $m_k$ serves the grid cell $g_i$, corresponds to placing a task in the knapsack.  

Under these conditions, problem $\mathcal{P}$ reduces to selecting a subset of tasks to maximize the objective $\sum_{i \in G} \min \left\{ \lambda_{i}^{t}, p_{k}^{t} \right\}$ while satisfying the capacity constraint $\sum_{i \in G} u_{i,k}^{t} \leq p_{k}^{t}$. This mirrors the structure of the Multiple Knapsack Problem, where the goal is to maximize the value of selected items under multiple capacity constraints.

Since the Multiple Knapsack Problem is NP-hard, and the special case of $\mathcal{P}$ is equivalent to MKP, it follows that $\mathcal{P}$ is at least as hard as the Multiple Knapsack Problem. Hence, $\mathcal{P}$ is NP-hard. 
\end{proof}

\subsection{Proof of Theorem 3}
\label{app:potential game}

\begin{proof}
Since $ G $ is a finite potential game and the potential function $ \Phi(\zeta) $ is bounded over the finite strategy space $ \mathcal{S} $, there exists a strategy profile $ \zeta^* $ that maximizes $ \Phi $:
\[
\zeta^* = \arg \max_{\zeta \in \mathcal{S}} \Phi(\zeta).
\]
At $ \zeta^* $, no resource $ m_k $ can change its strategy to increase its utility, as such a change would increase $ \Phi $, contradicting the minimality of $ \Phi(\zeta^*) $. Therefore, $ \zeta^* $ is a pure-strategy Nash equilibrium.
\end{proof}

\subsection{BiLSTM Gating Mechanism}
\label{app:tmp}

In the main text (Subsection~Spatio-Temporal Feature Extraction), we introduced a Bidirectional LSTM (BiLSTM) to capture temporal patterns. Here, we provide the detailed gating equations for each step in the forward pass. The backward pass follows the same formulation in reverse order.

\subsubsection{Forward LSTM.}
\begin{itemize}
    \item  \textbf{Forget Gate.}
Determines which information to remove from the cell state:
\begin{equation}\label{eq:A1}
F_{t} \;=\; \sigma\Bigl(W_{f} \cdot [H_{t-1},\, X_t] + b_{f}\Bigr).
\end{equation}

\item \textbf{Input Gate and Candidate Cell State.}
Selects which values to add to the cell state:
\begin{equation}\label{eq:A2}
I_{t} \;=\; \sigma\Bigl(W_{i} \cdot [H_{t-1},\, X_t] + b_{i}\Bigr),
\end{equation}
\begin{equation}\label{eq:A3}
\widetilde{C}_{t} \;=\; \tanh\Bigl(W_{C} \cdot [H_{t-1},\, X_t] + b_{C}\Bigr).
\end{equation}

\item \textbf{Cell State Update.}
Combines the old cell state (filtered by $F_{t}$) with newly arrived information (filtered by $I_{t}$):
\begin{equation}\label{eq:A4}
C_{t} \;=\; F_{t} \cdot C_{t-1} + I_{t} \cdot \widetilde{C}_{t}.
\end{equation}

\item \textbf{Output Gate.}
Generates the final hidden state at step $t$:
\begin{align}\label{eq:A5}
O_{t} \; &=\; \sigma\Bigl(W_{o} \cdot [H_{t-1},\, X_t] + b_{o}\Bigr),\\
\label{eq:A6}
H_{t} \; &=\; O_{t} \cdot \tanh\bigl(C_{t}\bigr),
\end{align}
where $X_t$ is the input at time $t$, $H_{t-1}$ is the previous hidden state, $C_{t-1}$ is the previous cell state, $\sigma(\cdot)$ is the sigmoid activation, and $\tanh(\cdot)$ is the hyperbolic tangent. $W_{f},W_{i},W_{C},W_{o}$ and $b_{f},b_{i},b_{C},b_{o}$ are learnable parameters.

\end{itemize}

\subsubsection{Backward LSTM.}
For the backward pass, the same gating operations \eqref{eq:A1}--\eqref{eq:A6} apply, but the sequence is processed in reverse order, i.e., from $t_{\max}$ down to $t_{\min}$. The final BiLSTM hidden state at time $t$ is the concatenation (or sum) of forward and backward hidden states:
\begin{equation}\label{eq:A7}
T_{t} \;=\; \overrightarrow{H}_{t} + \overleftarrow{H}_{t}.
\end{equation}

Hence, the BiLSTM captures both past and future contexts, providing enhanced temporal representations for resource allocation decisions.

\begin{algorithm}[ht!]
\caption{GSTRL Algorithm}
\label{alg:algorithm}
\raggedright
\textbf{Input}: Input sets of each public resource's state $M_t$, and sets of predicted crowd flow's amount $G_t$. The actor network with parameters $\theta _a$, and the critic network with parameters $\theta _c$. \\
\textbf{Output}: Trained actor network parameter $\theta _{a}^{*}$.
\par
\begin{algorithmic}[1] 
    \STATE Randomly initialize $\theta _a$ and $\theta _c$.
    \STATE Initialize replay buffer $\mathcal{B}$.
    \FOR{episodes $k=0,1,2,...,EPI _{max}$}
    \STATE Generate the action $a_t$ according to $\pi _{\theta _{\alpha}}$.
    \STATE Execute the action $a_t$ and obtain $s_t$ and $r_t$.
    \STATE Store $\left( s_t, a_t, s_{t+1}, r_t \right)$ into replay buffer $\mathcal{B}$.
    \STATE Randomly sample $M$ samples from the replay buffer of length $N$.  
    \FOR{$t=0,1,2,...,t_{max}$}
    \FOR{$k=0,1,2,...,k_{max}$}
    \STATE Calculate the expected reward.
    \STATE Compute the Q-value $Q_{\pi_{\theta_a}}(s_{t},a_{t})$ based on the samples.
    \STATE Compute Advantage $L\left( \omega \right)$ based on Eq.~\eqref{b-q}.
    \ENDFOR
    \ENDFOR
    \STATE Compute the loss of critic $\mathcal{L}_{\theta_{c}}$ based on Eq.~\eqref{e-q}. 
    \STATE Compute the loss of policy $\mathcal{L}_{\theta_{a}}$ based on Eq.~\eqref{f-q}.
    \STATE Calculate the jont loss $\mathcal{L}_{\theta_{ac}}$ based on Eq.~\eqref{g-q}.
    \STATE Update the parameters of the actor-critic model.  
    \ENDFOR
    \STATE \textbf{return} $\theta _{a}^{*} = \theta _a$.
\end{algorithmic}
\end{algorithm}

\subsection{Value Functions and Advantage}
\label{app:adv}

Given a policy $\pi_{\theta_a}$, we define the state-action function $Q_{\pi_{\theta_a}}(s_{t},a_{t})$ and the state-value function $V_{\pi_{\theta_a}}(s_{t})$ as:
\begin{align}
Q_{\pi_{\theta_a}}(s_{t},a_{t}) &= \mathbb{E}_{\pi_{\theta_a}}\bigl[r_{t} \,\bigm|\, s=s_{t}, a=a_{t}\bigr], \\
V_{\pi_{\theta_a}}(s_{t}) &= \mathbb{E}_{a_{t}\sim\pi_{\theta_a}(s_{t})}\bigl[\,Q_{\pi_{\theta_a}}(s_{t}, a_{t})\bigr]. 
\end{align}
The estimated value function is subsequently used to compute the advantage function $A_{\pi_{\theta_a}}(s_{t},a_{t})$:
\begin{equation}
A_{\pi_{\theta_a}}(s_{t},a_{t}) \;=\; Q_{\pi_{\theta_a}}(s_{t},a_{t}) \;-\; V_{\pi_{\theta_a}}(s_{t}).
\end{equation}
The advantage function reflects how much better (or worse) taking action $a_{t}$ in state $s_{t}$ is compared to the average action. To expedite computation, we approximate it as follows~\cite{Keneshloo2019DRL}:
\begin{equation}\label{b-q}
A_{\pi_{\theta_a}}(s_{t},a_{t}) \;\approx\; r_{t}\;+\;\gamma\,V_{\pi_{\theta_a}}(s_{t+1}) \;-\; V_{\pi_{\theta_a}}(s_{t}).
\end{equation}

\subsection{Algorithm}
\label{app:alg}
The algorithmic process of the proposed GSTRL algorithm is listed in Algorithm~\ref{alg:algorithm}.

\subsection{Real‑World Deployment Visualization}\label{app:realworld}

To provide an intuitive view of the behavior of GSTRL in practice, we perform a deployment visualization on TaxiBJ. We first aligned the recorded flow counts and their geographic coordinates with a real map of Beijing and rendered time-varying heatmap. Public resources are modeled as unmanned aerial vehicles (UAVs) unconstrained by the road network, and the study scenario is framed as drone delivery units that can respond directly to crowd demand. Throughout this study, we fix the initial energy to 40, deploy 50 public resources, and set their service capacity to 10. 

Using the trained GSTRL, we replay the test period from March 22 to March 31, 2016 and record the location of each public resource at each time step. \textbf{Fig.~\ref{fig:rea}} overlays their locations on the heatmap. 
As the demand distribution changes over time, most UAVs migrate visibly to the densest regions, highlighting the superior performance of GSTRL. We also plot the remaining energy and service capacity traces for each public resource, confirming that the policy satisfies two constraints of CPRA, every UAV retains sufficient energy to return to the resource depot at the end of each day and the service capacity remains constant over time. Our source codes are available in the supplementary materials.

\begin{figure}[!h]  
  \centering
  \includegraphics[width=0.95\linewidth]{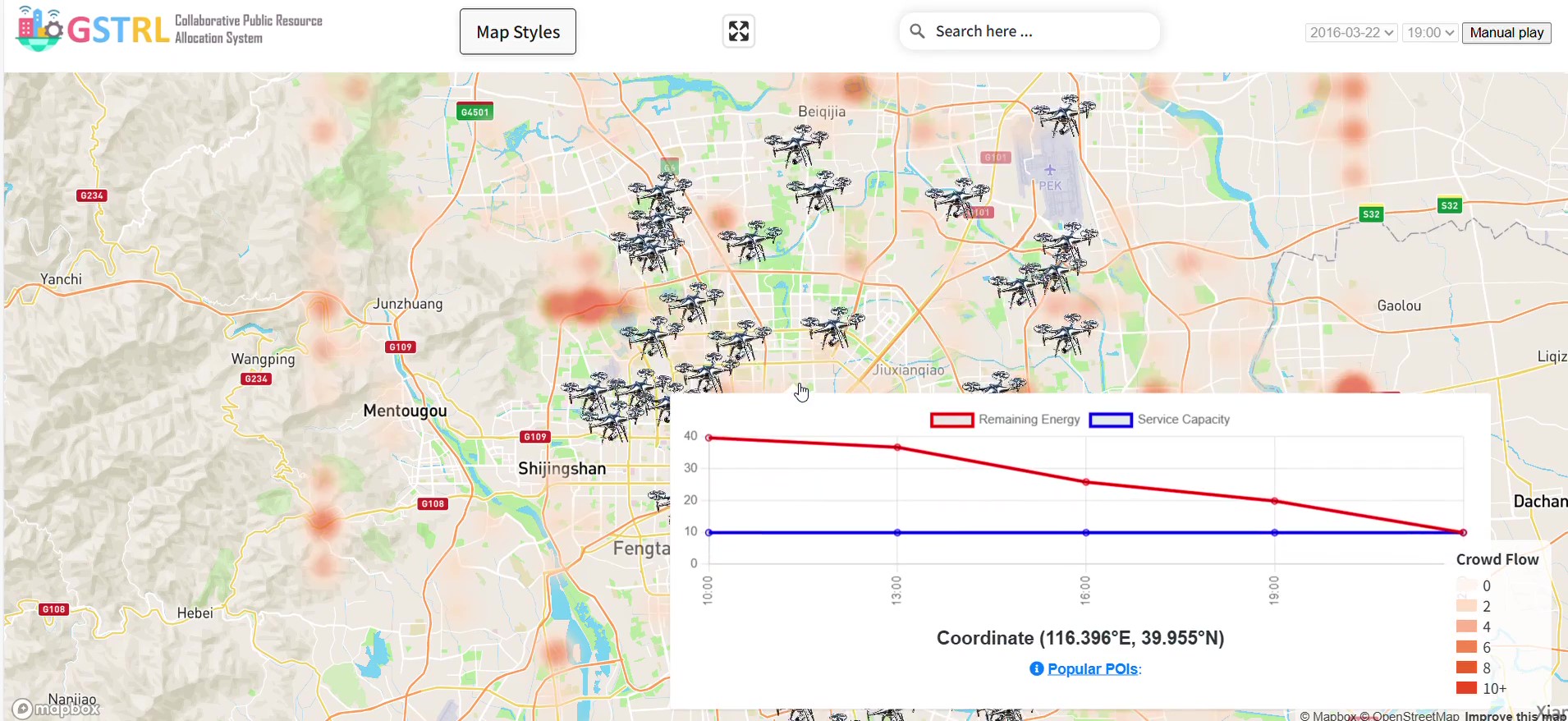}  
  \caption{Web platform of CPRA system.}  
  \label{fig:rea}
\end{figure}

\end{document}